\documentclass[letterpaper, 10 pt, journal, twoside]{IEEEtran}
%

\DeclareUnicodeCharacter{FFFD}{ }


%

%

%
\ifCLASSINFOpdf
\else
\fi

\usepackage{graphicx} 
\usepackage{epsfig} 
\usepackage{dblfloatfix} 
\usepackage{courier}
\usepackage{amsmath} 
\usepackage{amssymb}  
\usepackage{amsthm}
\usepackage{xcolor}
\usepackage{amsmath} 
\usepackage{amssymb}
\usepackage{float}
\usepackage{algorithm}		
\usepackage{algorithmicx} 	
\usepackage{algpseudocode}  
\usepackage{todonotes}
\usepackage{multicol}
\usepackage[backend=bibtex,bibstyle=ieee,citestyle=numeric-comp,doi=false,isbn=false,url=false,eprint=false]{biblatex}
\addbibresource{library.bib} 

\newcommand{\ignore}[1]{}


\newcommand{\norm}[1]{\left\Vert#1\right\Vert} 
\newcommand{\mc}[1]{\mathcal{#1}}

\newcommand{\bma}[1]{\left[\begin{array}{ #1}}
\newcommand{\ema}{\end{array}\right]}

\DeclareMathAlphabet{\mbf}{OT1}{ptm}{b}{n}
\newcommand{\mbs}[1]{{\boldsymbol{#1}}}


\newcommand{\mbfhat}[1]{{\hat{\mbf{#1}}}}
\newcommand{\mbfcheck}[1]{{\check{\mbf{#1}}}}

\newcommand{\mbftilde}[1]{{\tilde{\mbf{#1}}}}

 \newcommand{\rframe}[1]{{\ensuremath {\mathcal{F}}_{#1}}}

\def\fdotb{{\raisebox{-0.6ex}{ \kern0.2ex\raisebox{0.8ex}{\tiny $\hspace*{-1ex}\circ$}}}}
\def\fddotb{{\raisebox{-0.6ex}{ \kern0.2ex\raisebox{0.8ex}{\tiny $\hspace*{-1ex}\circ\circ$}}}}

\newtheorem{theorem}{Theorem}[section] 

\newcommand{\p}{\partial}

\newcommand{\trans}{{\ensuremath{\mathsf{T}}}} 
\newcommand{\utimes}{ {\raisebox{-0.6ex}{ \kern-1.0ex\raisebox{0.6ex}{ \small $\mathsf{v}$}}} } %
 %
 %
\newcommand{\trace}{ {\ensuremath{\mathrm{tr}}} } 



\newcommand{\beq}{\begin{equation}}
\newcommand{\eeq}{\end{equation}}
\newcommand{\bdis}{\begin{displaymath}}
\newcommand{\edis}{\end{displaymath}}
\newcommand{\beqarray}{\begin{eqnarray}}
\newcommand{\eeqarray}{\end{eqnarray}}
\newcommand{\beqarraynn}{\begin{eqnarray*}}
\newcommand{\eeqarraynn}{\end{eqnarray*}}
\newcommand{\balign}{\begin{align}}
\newcommand{\ealign}{\end{align}}
\newcommand{\balignnn}{\begin{align*}}
\newcommand{\ealignnn}{\end{align}}

\makeatletter
\renewcommand{\p@enumii}{\theenumi.}
\makeatother


\hyphenation{op-tical net-works semi-conduc-tor}
 
\begin{document}
%
%
%
%
%
%
%
\def \myJournal {IEEE Robotics and Automation Letters}
\def \myDoi {10.1109/LRA.2021.3067640}
\def \myPaperSiteName {IEEE Xplore}
\def \myPaperSiteLink {https://ieeexplore.ieee.org/document/9382085}
\def \myYear {2021}
\def \myPaperCitation{C. C. Cossette, M. Shalaby, D. Saussié, J. R. Forbes and J. Le Ny, ``Relative Position Estimation Between Two UWB Devices With IMUs,'' in \textit{IEEE Robotics and Automation Letters}, vol. 6, no. 3, pp. 4313-4320, July 2021.}


\begin{figure*}[t]

\thispagestyle{empty}
\begin{center}
\begin{minipage}{6in}
\centering
This paper has been accepted for publication in \emph{\myJournal}. 
\vspace{1em}

This is the author's version of an article that has, or will be, published in this journal or conference. Changes were, or will be, made to this version by the publisher prior to publication.
\vspace{2em}

\begin{tabular}{rl}
DOI: & \myDoi\\
\myPaperSiteName: & \texttt{\myPaperSiteLink}
\end{tabular}

\vspace{2em}
Please cite this paper as:

\myPaperCitation

\vspace{15cm}
\copyright \myYear \hspace{4pt}IEEE. Personal use of this material is permitted. Permission from IEEE must be obtained for all other uses, in any current or future media, including reprinting/republishing this material for advertising or promotional purposes, creating new collective works, for resale or redistribution to servers or lists, or reuse of any copyrighted component of this work in other works.

\end{minipage}
\end{center}
\end{figure*}
\newpage
\clearpage
\pagenumbering{arabic} 
\title{Relative Position Estimation Between Two UWB Devices with IMUs}
%
%
%

\author{Charles Champagne Cossette$^1$, Mohammed Shalaby$^1$,  David Saussi\'e$^2$, James Richard Forbes$^1$, \\ Jerome Le Ny$^2$%
\thanks{Manuscript received: Oct. 15th, 2020; Revised Jan. 15th, 2021; Accepted Feb. 14th, 2021}
\thanks{This paper was recommended for publication by Editor Javier Civera upon evaluation of the Associate Editor and Reviewers' comments.
*This work was supported by the FRQNT under grant 2018-PR-253646, the JELF, the William Dawson Scholar Program, and the NSERC Discovery Grant Program.} 
\thanks{$^1$C. C. Cossette, M. Shalaby, J.R. Forbes are with the Department of Mech. Engineering, McGill University. { \small charles.cossette@mail.mcgill.ca, mohammed.shalaby@mail.mcgill.ca, james.richard.forbes@mcgill.ca.}}%
\thanks{$^2$D. Saussi\'e, J. Le Ny, are with the Department of Electrical Engineering, Polytechnique Montr\'eal. 
{\small d.saussie@polymtl.ca}, {\small jerome.le-ny@polymtl.ca}.}%
\thanks{Digital Object Identifier (DOI): see top of this page.}
}
%
%

\markboth{IEEE Robotics and Automation Letters. Preprint Version. Accepted February, 2021}
{Cossette \MakeLowercase{\textit{et al.}}: Relative Position Estimation Between Two UWB Devices with IMUs} 

%



\maketitle

\begin{abstract}
    For a team of robots to work collaboratively, it is crucial that each robot have the ability to determine the position of their neighbors, relative to themselves, in order to execute tasks autonomously. This letter presents an algorithm for determining the three-dimensional relative position between two mobile robots, each using nothing more than a single ultra-wideband transceiver, an accelerometer, a rate gyro, and a magnetometer. A sliding window filter estimates the relative position at selected keypoints by combining the distance measurements with acceleration estimates, which each agent computes using an on-board attitude estimator. The algorithm is appropriate for real-time implementation, and has been tested in simulation and experiment, where it comfortably outperforms standard estimators. A positioning accuracy of less than 1 meter is achieved with inexpensive sensors.
\end{abstract}

\begin{IEEEkeywords}
    Localization; Range Sensing; Multi-Robot Systems
\end{IEEEkeywords}

%
\IEEEpeerreviewmaketitle

\section{Introduction}
%
%
%
%
\IEEEPARstart{T}{he} ability to determine the relative position between two robots, or \emph{agents}, is needed in many applications. For instance, multi-robot tasks such as collaborative exploration and mapping, search and rescue, and formation control, each require relative position information. Access to inter-robot distance measurements is becoming increasingly accessible due to technologies such as ultra-wideband radio (UWB). UWB in particular is attractive due to its low cost, low power, high accuracy, and ability to function in GPS-denied environments. Such advantages have even warranted the presence of UWB transceivers in smartphones and smartwatches, which could be used to provide position information of one user relative to another or, more generally, any other UWB transceiver. However, determining a full three-dimensional relative position estimate from a single distance measurement is impossible. There is an infinite set of relative positions that will produce the same distance measurement. Hence, to realize an observable relative positioning solution, more information is required. 
\begin{figure}[t]
    \centering
    \includegraphics[width = \linewidth, clip = true, trim = {0cm 0cm 0cm 10cm}]{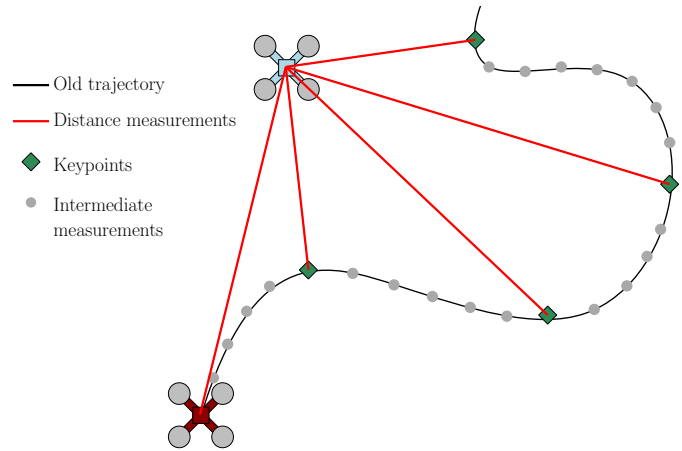}
    \caption{An example scenario where two quadrotors possess UWB modules, providing distance measurements between them. Under sufficient motion, it is possible to determine the relative position between them in a common reference frame.}
\end{figure}

Distance-based positioning is very commonly achieved by measuring distances to several landmarks with known positions, referred to as \emph{anchors} when using UWB \cite{Ledergerber2015,Cano2019a}. However, the problem of estimating the relative position between moving robots, without any external reference, is much more difficult. { Nevertheless, there are several papers that achieve relative positioning from distance measurements, and do so by using additional sensors.
For example, cameras are often used in addition to UWB \cite{Xu2020}, but vision requires substantial computational capabilities, which imposes a lower bound on the size of the agent. In \cite{Shalaby2021}, the presence of some agents with two UWB tags makes the relative positions observable.
The observability analysis provided by \cite{Batista2011} concludes that when velocity information is available in addition to single-distance measurements, the relative position is observable provided that a \emph{persistency of excitation} condition is satisfied, meaning that the agents must be in continuous relative motion. This is simulated in \cite{Sarras2018}, where velocity measurements are assumed to be available, and experimented in \cite{Guo2017a}, however GPS is used to provide displacement information for the agents. 
Cameras with optical flow are used to measure agent motion in \cite{VanderHelm2018, Nguyen2019,Nguyen2020}, and \cite{Liu2017a} proposes a particle filter to estimate the relative positions of a group of mobile UWB devices, which also have IMUs. However, their solution requires a central server to perform the estimation task. The literature lacks three-dimensional solutions that use only UWB and IMU measurements.}
%
%
%

The main contribution of this letter is a three-dimensional relative position estimation algorithm for two mobile agents, each using nothing more than a single sensor measuring the distance between them, and a 9-DOF IMU. No fixed infrastructure, GPS, cameras, or heavy computing is required, and the position is resolved in a known local frame, such as an East-North-Up reference frame. The proposed solution is easily decentralizable, and achieves sub-meter level accuracy in experiment. The algorithm requires persistency of excitation, meaning that the agents must be moving in a non-planar trajectory.

The remainder of this letter is as follows. Section \ref{sec:preliminaries} introduces the notation and probabilistic estimation tools used in this letter. Section \ref{sec:algorithm} outlines the proposed algorithm, with simulation and experimental results presented in Sections \ref{sec:simulation} and \ref{sec:experiment}, respectively.

\section{Preliminaries} \label{sec:preliminaries}
\subsection{Notation}
In this letter, $p(\mbf{x})$ denotes the joint probability density function (PDF) of the continuous random variable $\mbf{x} \in \mathbb{R}^n$. If $\mbf{x}$ is normally distributed with mean $\mbs{\mu}$ and covariance $\mbs{\Sigma}$, it is written as $\mbf{x} \sim \mc{N}(\mbs{\mu}, \mbs{\Sigma})$. The compressed notation $\mbf{x}_{0:K} = [\mbf{x}_0^\trans \ldots \mbf{x}_K^\trans ]^\trans$ is used for brevity. A bolded $\mbf{1}$ indicates an appropriately-sized identity matrix. 

An arbitrary reference frame `$a$' is denoted $\rframe{a}$. Physical vectors resolved in $\rframe{a}$ are denoted with a subscript $\mbf{v}_a$. The same physical vector resolved in a different frame $\rframe{b}$ can be related by a direction cosine matrix (rotation matrix), denoted $\mbf{C}_{ab} \in SO(3)$, such that $\mbf{v}_a = \mbf{C}_{ab}\mbf{v}_b$ \cite{Farrell2008}. A direction cosine matrix (DCM) $\mbf{C}_{ab}$ can be parameterized using a rotation vector, denoted $\mbs{\phi}\in \mathbb{R}^3$, using  $\mbf{C}_{ab} = \exp\left(\mbs{\phi}^{\times}\right)$, where $(\cdot)^\times$ denotes the skew-symmetric cross-product matrix operator. 
In the context of this letter, $\rframe{b}$ is associated with an agent's body frame, and $\rframe{a}$ refers to some local common frame, such as an East-North-Up frame.

\subsection{Maximum A Posteriori Estimation}
Consider generic, nonlinear process and measurement models, given respectively by 
\begin{align} \label{eq:map3}
    \mbf{x}_k &= \mbf{f}(\mbf{x}_{k-1}, \mbf{u}_{k-1}) + \mbf{w}_{k-1},  & k= 1,\ldots,K, \\
    \mbf{y}_k &= \mbf{g}(\mbf{x}_k) + \mbf{v}_k,  &k = 0,\ldots, K, \label{eq:map4}
\end{align}
where $\mbf{x}_k \in \mathbb{R}^n$ denotes the system state at time step $k$, $\mbf{u}_{k-1}$ is the system input, and $\mbf{y}_k$ is the output. The terms $\mbf{w}_{k-1} \sim \mc{N}(\mbf{0}, \mbf{Q}_{k-1})$ and $\mbf{v}_k \sim \mc{N}(\mbf{0}, \mbf{R}_k)$ represent random process and measurement noise, respectively. 

The \emph{maximum a posteriori} framework aims to find the states $\mbf{x}_{0:K}$ which maximize the posterior probability density function, given a history of input measurements $\mbf{u}_{0:K-1}$, output measurements $\mbf{y}_{0:K}$, and a prior distribution of the initial state, $\mbf{x}_0 \sim \mc{N}(\mbfcheck{x}_0, \mbfcheck{P}_0)$. That is, the estimate $\mbfhat{x}_{0:K}$ is given by
\begin{align} \label{eq:map1}
    \mbfhat{x} = \arg \max_{\mbf{x}} p (\mbf{x} | \mbfcheck{x}_0, \mbf{u}_{0:K-1}, \mbf{y}_{0:K}),
\end{align}
where $\mbf{x} = \mbf{x}_{0:K}$ has been written without subscripts to reduce notation. {It is well known that \eqref{eq:map1} is equivalently posed as a nonlinear least-squares problem \cite[Ch.~3.1]{Barfoot2019} given by}
\beq \label{eq:map2}
\mbfhat{x} = \arg\min_{\mbf{x}} \frac{1}{2} \mbf{e}(\mbf{x})^\trans \mbf{W} \mbf{e}(\mbf{x}),
\eeq
where
\beq
\mbf{e}(\mbf{x}) = \bma{c}{\mbf{e}_0(\mbf{x})} \\ {\mbf{e}_{u,1}(\mbf{x})} \\ \vdots \\ {\mbf{e}_{u,K}(\mbf{x})} \\ {\mbf{e}_{y,0}(\mbf{x})} \\ \vdots \\ {\mbf{e}_{y,K}(\mbf{x})}  \ema, \begin{array}{c} {\mbf{e}_0(\mbf{x}) = \mbf{x}_0 - \mbfcheck{x}_0} ,\\ \\ {\mbf{e}_{u,k}(\mbf{x})  = \mbf{x}_k - \mbf{f}(\mbf{x}_{k-1}, \mbf{u}_{k-1}),}  \\ \\ {\mbf{e}_{y,k}(\mbf{x}) = \mbf{y}_k - \mbf{g}(\mbf{x}_k),} \\\\ \mbf{W} = \mathrm{diag}({\mbfcheck{P}_0^{-1}},{\mbf{Q}_0^{-1},\ldots,\mbf{Q}_{K-1}^{-1}}, \\ \hspace{3cm} {\mbf{R}_0^{-1},\ldots,\mbf{R}_K^{-1}}). \end{array}\label{eq:map6} 
\eeq
This is known as the full \emph{batch nonlinear least-squares} estimator. The optimization problem \eqref{eq:map2} can be solved iteratively using the \emph{Gauss-Newton} algorithm, which starts with an initial state estimate $\mbf{x}^{(0)}$, and computes an update $\delta \mbf{x}^{(\ell)}$ to the $\ell^\mathrm{th}$ iteration by solving
\beq \label{eq:gn1}
(\mbf{H}^\trans \mbf{W} \mbf{H}) \delta \mbf{x}^{(\ell)} = -\mbf{H}^\trans \mbf{W} \mbf{e}(\mbf{x}^{(\ell)}), 
\eeq
where 
\bdis 
\mbf{H} = \left. \frac{\p \mbf{e}(\mbf{x})}{\p \mbf{x}} \right|_{\mbf{x}^{(\ell)}} = \bma{cccc} \mbf{1} &&& \\ -\mbf{A}_0 & \mbf{1} && \\ &\ddots &\ddots & \\ && -\mbf{A}_{K-1} & \mbf{1}\\ -\mbf{C}_0 &&&  \\& -\mbf{C}_1 &&  \\ && \ddots & \\  &&& -\mbf{C}_K \ema,
\edis
where $\mbf{A}_k$ and $\mbf{C}_k$ are the Jacobians of the process and measurement models, respectively, with respect to the state $\mbf{x}_k$. The state estimate is updated with $\mbf{x}^{(\ell+1)} = \mbf{x}^{(\ell)} + \delta \mbf{x}^{(\ell)}$, and the process is repeated until convergence. In practice, variants such as the \emph{Levenberg-Marquart} algorithm, are used to improve convergence.

In this batch framework, local observability of the system defined by \eqref{eq:map3}-\eqref{eq:map4} amounts to showing that the matrix $\mbf{H}^\trans \mbf{W} \mbf{H}$ is full rank when the prior is ignored. In \cite[Ch.~3.1]{Barfoot2019}, it is shown that this culminates in the equivalent requirement that 
\beq
\mathrm{rank} \left[ \mbf{C}_0^\trans \;\;\; \mbf{A}_0^\trans \mbf{C}_1^\trans \;\;\; \mbf{A}_0^\trans \mbf{A}_1^\trans \mbf{C}_2^\trans \;\;\; \ldots \;\;\; \mbf{A}_0^\trans \ldots \mbf{A}_{K-1}^\trans \mbf{C}_{K}^\trans \right] = n. \label{eq:map5}
\eeq

\subsection{Sliding Window Filtering}
The \emph{sliding window filter} is a batch estimation framework with constant time and memory requirements, achieving so by \emph{marginalizing out} older states \cite{Sibley2006,Dong-Si2011a}. Consider a scenario where a robot travels until time $t_{k_1}$, at which point it performs a full batch estimate of its state history, to produce $\mbfhat{x}_{0:k_1}$. It then continues to travel until time $t_{k_2}$, adding the new states $\mbf{x}_{k_1 + 1:k_2}$ to its state history. The $m$ oldest states $\mbf{x}_{0:m-1}$ are then marginalized out, thus removing them from the optimization problem being solved at $t_{k_2}$. The remaining states from the previous window's estimate are $\mbf{x}_{m:k_1}$.
\begin{align*}
    \overbrace{\hspace{4cm}}^{\text{new window of length }K= k_2 - m + 1} \\
\underbrace{\mbf{x}_0\;\; \mbf{x}_1 \; \ldots \; \mbf{x}_{m -1} \;\; \mbf{x}_{m} \; \ldots \; \mbf{x}_{k_1}}_{\text{old window of length }K = k_1 + 1}\;\; \mbf{x}_{k_1+1} \;\ldots \; \mbf{x}_{k_2}
\end{align*}
The joint PDF of the new window, given the entire history of measurements until $t_{k_2}$, can be written as
{
\begin{multline}
  p( \mbf{x}_{m:k_2}| \mbfcheck{x}_0, \mbf{u}_{0:k_2-1}, \mbf{y}_{0:k_2})  =\\ \eta p(\mbf{x}_{m+1:k_2}|\mbf{u}_{m:k_2-1}, \mbf{y}_{m:k_2}, \mbf{x}_m) p(\mbf{x}_{m} | \mbfcheck{x}_0, \mbf{u}_{0:m-1}, \mbf{y}_{0:m -1}).  \label{eq:swe3} 
\end{multline}}
The crux of the marginalization process is to determine an expression for $p(\mbf{x}_{m} | \mbfcheck{x}_0, \mbf{u}_{0:m-1}, \mbf{y}_{0:m -1})$, which takes the role of the new ``prior'' distribution of the new window. Performing marginalization properly, as opposed to naively removing the oldest states from the current estimation problem, is critical to maintaining an accurate, consistent estimator. To do this, consider instead the following PDF, for which an analytical expression is available,
\begin{multline} \label{eq:swe2}
p(\mbf{x}_{0:m}| \mbfcheck{x}_0, \mbf{u}_{0:m-1}, \mbf{y}_{0:m -1}) \\= \beta \exp(-\frac{1}{2}\mbf{e}_m(\mbf{x}_{0:m})^\trans \mbf{W}_m \mbf{e}_m(\mbf{x}_{0:m})),
\end{multline}
where 
\begin{align*}
\mbf{e}_m(\mbf{x}_{0:m}) &= \bma{ccccccc} \mbf{e}_0^\trans & \mbf{e}_{u,1}^\trans & \ldots& \mbf{e}_{u,m}^\trans& \mbf{e}_{y,0}^\trans& \ldots& \mbf{e}_{y,m-1}^\trans\ema^\trans,\\
\mbf{W}_m &= \mathrm{diag}(\mbfcheck{P}_0^{-1}, \mbf{Q}_0^{-1}, \ldots, \mbf{Q}_{m-1}^{-1}, \mbf{R}_0^{-1}, \ldots, \mbf{R}_{m-1}^{-1}),
\end{align*}
$\beta$ is a normalization constant, and the $\mbf{e}_0, \mbf{e}_{u,i}, \mbf{e}_{y,i}$ terms are defined in \eqref{eq:map6}, but written without arguments. This is, in general, not a Gaussian distribution due to the nonlinear nature of $\mbf{e}_m(\cdot)$, but it can be approximated as one by linearizing $\mbf{e}_m(\cdot)$ about a subset of the state estimates obtained from the previous window, being $\mbfhat{x}_{0:m} $. Substituting a first-order approximation of $\mbf{e}_m(\cdot)$ into \eqref{eq:swe2}, and after some manipulation, the mean and covariance of a Gaussian approximation to $p(\mbf{x}_{0:m}| \mbfcheck{x}_0, \mbf{u}_{0:m-1}, \mbf{y}_{0:m -1}) \approx \mc{N}(\mbs{\mu}_{0:m}, \mbs{\Sigma}_{0:m})$ can be shown to be
\begin{align}
    \mbs{\mu}_{0:m} &= \bma{c} \mbs{\mu}_{0:m-1} \\ {\mbs{\mu}_{m}} \ema \nonumber \\
    &= \mbfhat{x}_{0:m}  - (\mbf{H}^\trans_m \mbf{W}_m \mbf{H}_m)^{-1}\mbf{H}_m^\trans \mbf{W}_m \mbf{e}_m(\mbfhat{x}_{0:m}) ,\label{eq:swf5} \\
    \mbs{\Sigma}_{0:m} &=\bma{cc} \mbs{\Sigma}_{0:m-1} & \mbs{\Sigma}_{0:m-1,m}  \\ \mbs{\Sigma}_{m,0:m-1} & {\mbs{\Sigma}_{m}} \ema= (\mbf{H}^\trans_m \mbf{W}_m \mbf{H}_m)^{-1}, \label{eq:swf3}
\end{align}
respectively, where 
\beq
\mbf{H}_m = \left.\frac{\p \mbf{e}_m(\mbf{x})}{\p \mbf{x}} \right|_{\mbfhat{x}_{0:m}}.  \label{eq:swf1}
\eeq
It finally follows that 
\bdis
p(\mbf{x}_{m} | \mbfcheck{x}_0, \mbf{u}_{0:m-1}, \mbf{y}_{0:m -1}) \approx \mc{N}(\mbs{\mu}_{m}, \mbs{\Sigma}_{m}).
\edis
The specific step in which the marginalization occurred is when $\mbs{\mu}_m$ and $\mbs{\Sigma}_m$ are extracted from \eqref{eq:swf5} and \eqref{eq:swf3}, respectively.  With an expression for the new prior now identified, one may return to \eqref{eq:swe3} to construct a nonlinear least-squares problem in the exact same manner as the full batch problem, and solve it with the Gauss-Newton or Levenberg-Marquart algorithm.

\section{Proposed Algorithm}\label{sec:algorithm}
\begin{figure}[t]
    \includegraphics[width = \linewidth]{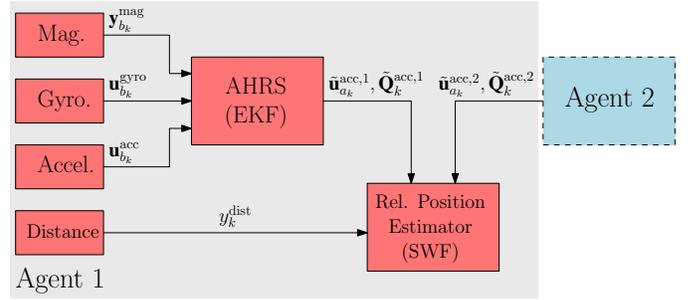}
    \caption{High-level diagram of the architecture of the proposed algorithm. Each agent contains an AHRS that uses on-board accelerometer, rate gyro, and magnetometer measurements.}
    \label{fig:block_architecture}
\end{figure}
\subsection{High-level architecture}
The proposed algorithm consists of two main components, with the architecture displayed graphically in Figure \ref{fig:block_architecture}. Although each agent can execute the following algorithm, it will be explained from the perspective of Agent 1.
\begin{enumerate}
    \item {\textbf{Attitude \& Heading Reference System (AHRS)} \\ At each IMU measurement, the agents calculate their own attitudes relative to a common local frame, such as an East-North-Up reference frame, denoted $\rframe{a}$. Using the attitude estimate, Agent 2 communicates an estimate of their translational acceleration, resolved in $\rframe{a}$, along with a corresponding covariance.}
    \item {\textbf{Relative Position Estimator (RPE)}\\
    Using the translational acceleration information of Agent 2 communicated to Agent 1, as well as a set of previous distance measurements, a sliding window filter determines the position and velocity of Agent 1 relative to Agent 2 in $\rframe{a}$. The high-frequency acceleration information of both agents is integrated between a set of optimized \emph{keypoints}, which will be explained in Section \ref{sec:rpe}.}
\end{enumerate}
The primary contribution of this letter is the development and testing of the RPE. The choice of using a sliding window filter instead of a one-step-ahead filter, such as an extended Kalman filter (EKF), stems from the fact that position and velocity states are instantaneously unobservable given one distance measurement. However, as will be shown in Section \ref{sec:obsv}, the system is observable when multiple distance measurements are used for state estimation, which is what the sliding window filter does.

The cascaded architecture involving the AHRS and the RPE is \emph{loosely-coupled} due to the separation of attitude and translational state estimators. This choice comes with several advantages and disadvantages, when compared to a \emph{tightly-coupled} framework that would estimate the attitude and translational states in one estimator. The main disadvantage is that any coupling information between the AHRS and RPE states is lost. {This point represents the largest possible source of inaccuracy, as it results in the RPE performance being highly dependent on the accuracy of the attitude estimates.} However, this cost is justified by the following advantages.
\begin{itemize}
    \item The corrective step of the AHRS can be executed at an arbitrarily high frequency, thus enabling the incorporation of all possible magnetometer and accelerometer measurements for attitude correction.
    \item The processing can easily be distributed among the two agents, {and communication requirements are heavily reduced as gyroscope and magnetometer measurements do not need to be shared.}
    \item The relative position dynamics reduce to a linear model, which is convenient from both mathematical and computational points of view. 
\end{itemize}
{\subsection{Attitude \& Heading Reference System}}
The AHRS used in the presented simulations and experiments is an extended Kalman filter (EKF) very similar to \cite[Ch.~10]{Farrell2008}, but with some modifications based on \cite{Barrau2017}. The AHRS estimates the agent attitude at time step $k$, denoted as $\mbfhat{C}_{ab_k}$. The estimate also has a corresponding covariance $\mbf{P}^{\mathrm{ahrs}}_k$, defined such that $\mbf{C}_{ab_k} = \mbfhat{C}_{ab_k}\exp(\delta \mbs{\phi}^\times)$, where $\delta \mbs{\phi} \sim \mc{N}(\mbf{0},\mbf{P}^{\mathrm{ahrs}}_k)$ and $\mbf{C}_{ab_k}$ is the true agent attitude.
The translational acceleration vector $\mbftilde{u}^{\mathrm{acc}}_{a_k}$, resolved in $\rframe{a}$, can be calculated using the vehicle attitude and the raw accelerometer measurements $\mbf{u}^{\mathrm{acc}}_{b_k}$ using
\beq \label{eq:ahrs1}
\mbftilde{u}^{\mathrm{acc}}_{a_k} = \mbf{C}_{ab_k} \mbf{u}^{\mathrm{acc}}_{b_k} + \mbf{g}_a,
\eeq
where $\mbf{g}_a$ is the gravity vector resolved in $\rframe{a}$. However, a covariance associated with $\mbftilde{u}^{\mathrm{acc}}_{a_k}$ is also required. Equation \eqref{eq:ahrs1} is a nonlinear function of two random variables, $\mbf{C}_{ab_k}$ and $\mbf{u}^{\mathrm{acc}}_{b_k}$, and as such the translational acceleration distribution can be approximated as 
$\mbftilde{u}^{\mathrm{acc}}_{a_k} \sim \mc{N}(\mbfhat{C}_{ab_k} \mbf{u}^{\mathrm{acc}}_{b_k} + \mbf{g}_a, \mbftilde{Q}^{\mathrm{acc}}_k)$ where
 \bdis
 \mbftilde{Q}^{\mathrm{acc}}_k = \mbf{M} \mbf{Q}^{\mathrm{acc}}_k\mbf{M}^\trans + \mbf{G} \mbf{P}^{\mathrm{ahrs}}_k \mbf{G}^\trans, 
 \edis
 and $\mbf{Q}^{\mathrm{acc}}_k$ is the covariance associated with the raw accelerometer measurements. The matrices $\mbf{M} = \mbfhat{C}_{ab_k}$ and $\mbf{G} = -  \mbfhat{C}_{ab_k}\mbf{u}^{\mathrm{acc}^\times}_{b_k}$ are the Jacobians of \eqref{eq:ahrs1} with respect to $ \mbf{u}^{\mathrm{acc}}_{b_k}$ and $\delta \mbs{\phi}$, respectively. Although not specified in the notation used in this section, each of the two agents computes their own, independent estimates of $\mbftilde{u}^{\mathrm{acc}}_{a_k} $ and $\mbftilde{Q}^{\mathrm{acc}}_k $. They are both communicated to the relative position estimator, consisting of 3 numbers for the $\mbftilde{u}^{\mathrm{acc}}_{a_k} $ and 6 numbers for the symmetric covariance matrix $\mbftilde{Q}^{\mathrm{acc}}_k $.

\subsection{Relative Position Estimator}\label{sec:rpe}

Let $\mbf{r}^{12}_{a_k}$ be the position of Agent 1, relative to Agent 2, resolved in $\rframe{a}$. Let $\mbf{v}^{12}_{a_k}$ be the velocity of Agent 1, relative to Agent 2, with respect to $\rframe{a}$. The relative position estimator, which will be on-board Agent 1, estimates the state $\mbf{x}_k =  [\mbf{r}^{12^\trans}_{a_k} \; \mbf{v}^{12^\trans}_{a_k}]^\trans$. The translational acceleration and corresponding covariances of Agents 1 and 2, being $(\mbftilde{u}^{\mathrm{acc},1}_{a_k},\mbftilde{Q}^{\mathrm{acc},1}_k)$ and $(\mbftilde{u}^{\mathrm{acc},2}_{a_k},\mbftilde{Q}^{\mathrm{acc},2}_k)$, are combined to produce a relative acceleration estimate, $\mbf{u}_k \triangleq \mbftilde{u}^{\mathrm{acc},1}_{a_k} - \mbftilde{u}^{\mathrm{acc},2}_{a_k},$ with covariance $ \mbftilde{Q}_k = \mbftilde{Q}^{\mathrm{acc},1}_k + \mbftilde{Q}^{\mathrm{acc},2}_k$.
This allows the relative position and velocity process model to be written as
\beq
\mbf{x}_k = \mbf{A}_{k-1} \mbf{x}_{k-1} + \mbf{B}_{k-1} \mbf{u}_{k-1} + \mbf{w}_{k-1},\label{eq:rpe1}
\eeq 
where $\Delta t_k = t_k - t_{k-1}$, $\mbf{w}_{k-1} \sim \mc{N}(\mbf{0}, \mbf{Q}_{k-1})$, 
\bdis
\mbf{A}_{k-1} = \bma{cc} \mbf{1} & \Delta t_{k} \mbf{1} \\ \mbf{0} & \mbf{1} \ema , \quad \mbf{B}_{k-1}= \bma{c} (\Delta t_{k}^2/2) \mbf{1} \\ \Delta t_{k} \mbf{1} \ema,
\edis
\bdis
\mbf{Q}_{k-1} = \bma{cc} \frac{1}{3}\Delta t_k^3 \mbftilde{Q}_{k-1} & \frac{1}{2} \Delta t_k^2 \mbftilde{Q}_{k-1} \\\frac{1}{2} \Delta t_k^2 \mbftilde{Q}_{k-1} & \Delta t_k \mbftilde{Q}_{k-1} \ema.
\edis
 Furthermore, the RPE uses distance measurements, which have a measurement model of the form
\beq
y^{\mathrm{dist}}_k = \norm{\mbf{r}^{12}_{a_k}} + v_k, \label{eq:rpe3}
\eeq
where $v_k \sim \mc{N}(0, R_k)$. The Jacobian of \eqref{eq:rpe3} with respect to $\mbf{x}_k$ is given by $\mbf{C}_k = \bma{cc} \mbs{\rho}_{k}^\trans & \mbf{0} \ema,$  where  $ \mbs{\rho}_k = \mbf{r}^{12}_{a_k} / \norm{\mbf{r}^{12}_{a_k}}$.
\begin{figure*}[t]
    \centering
    \includegraphics[trim = {16cm 6cm 16cm 5.9cm}, clip = true, width = 0.33333\textwidth]{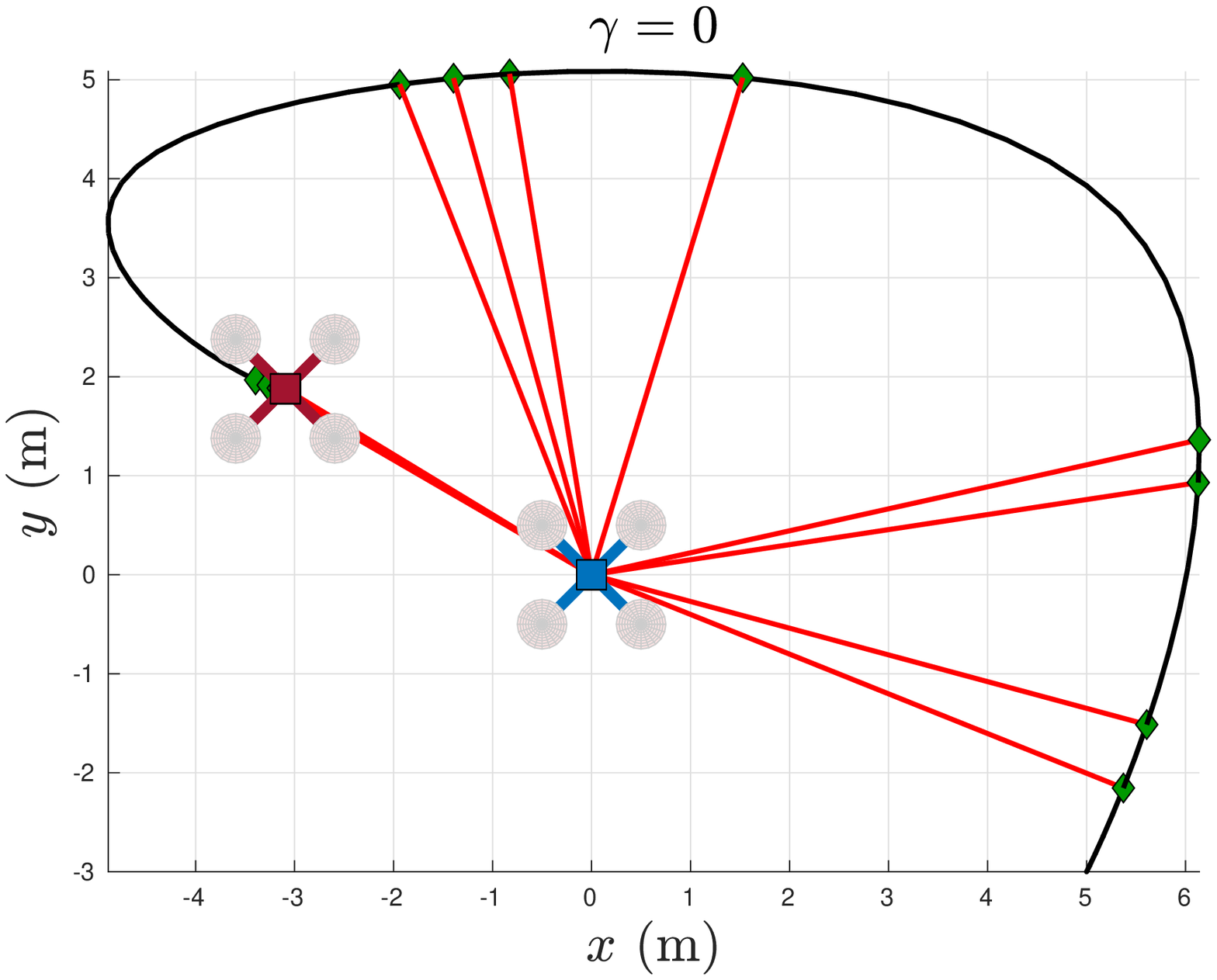}%
    \includegraphics[trim = {16cm 6cm 16cm 5.9cm}, clip = true, width = 0.33333\textwidth]{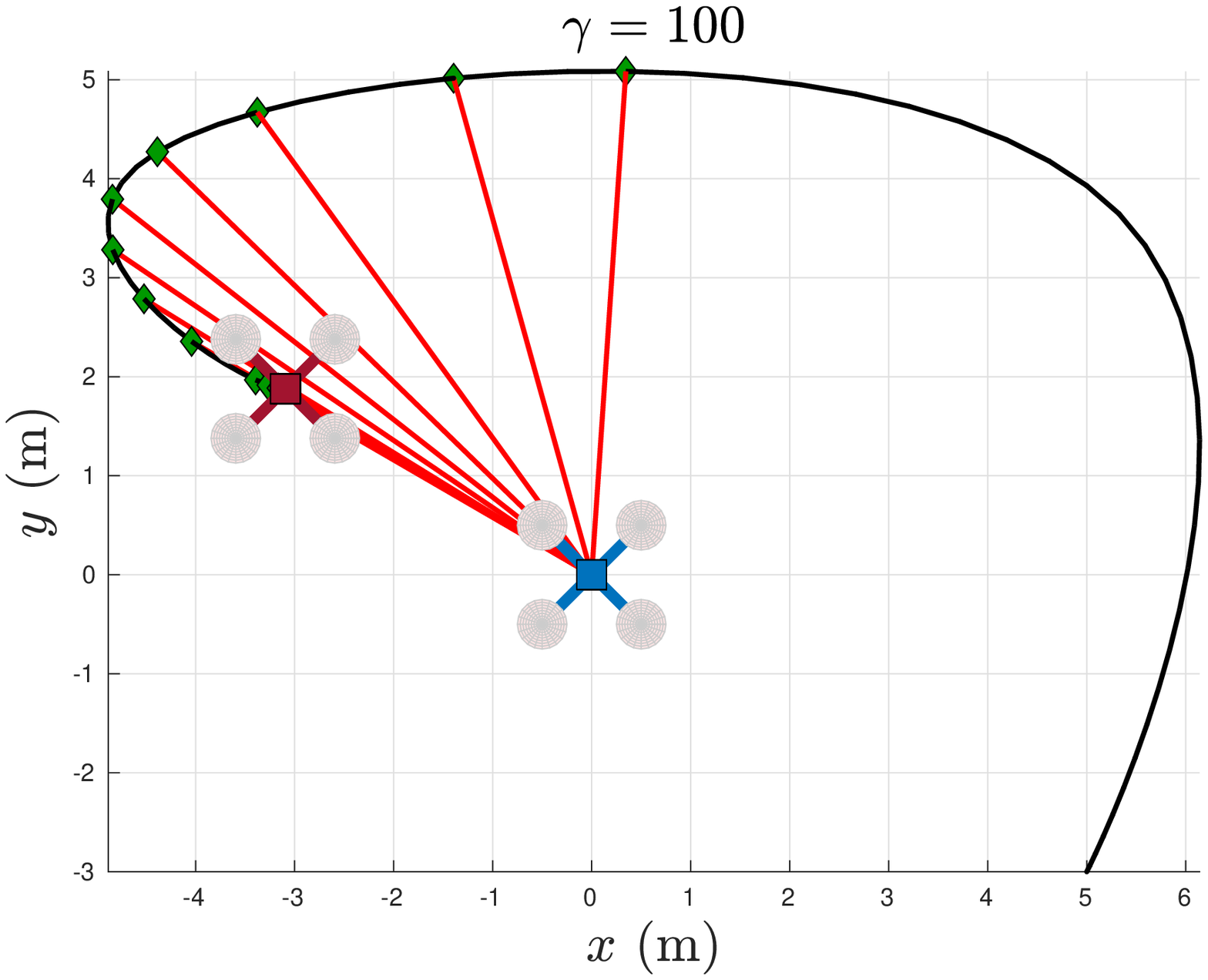}%
    \includegraphics[trim = {16cm 6cm 16cm 5.9cm}, clip = true, width = 0.33333\textwidth]{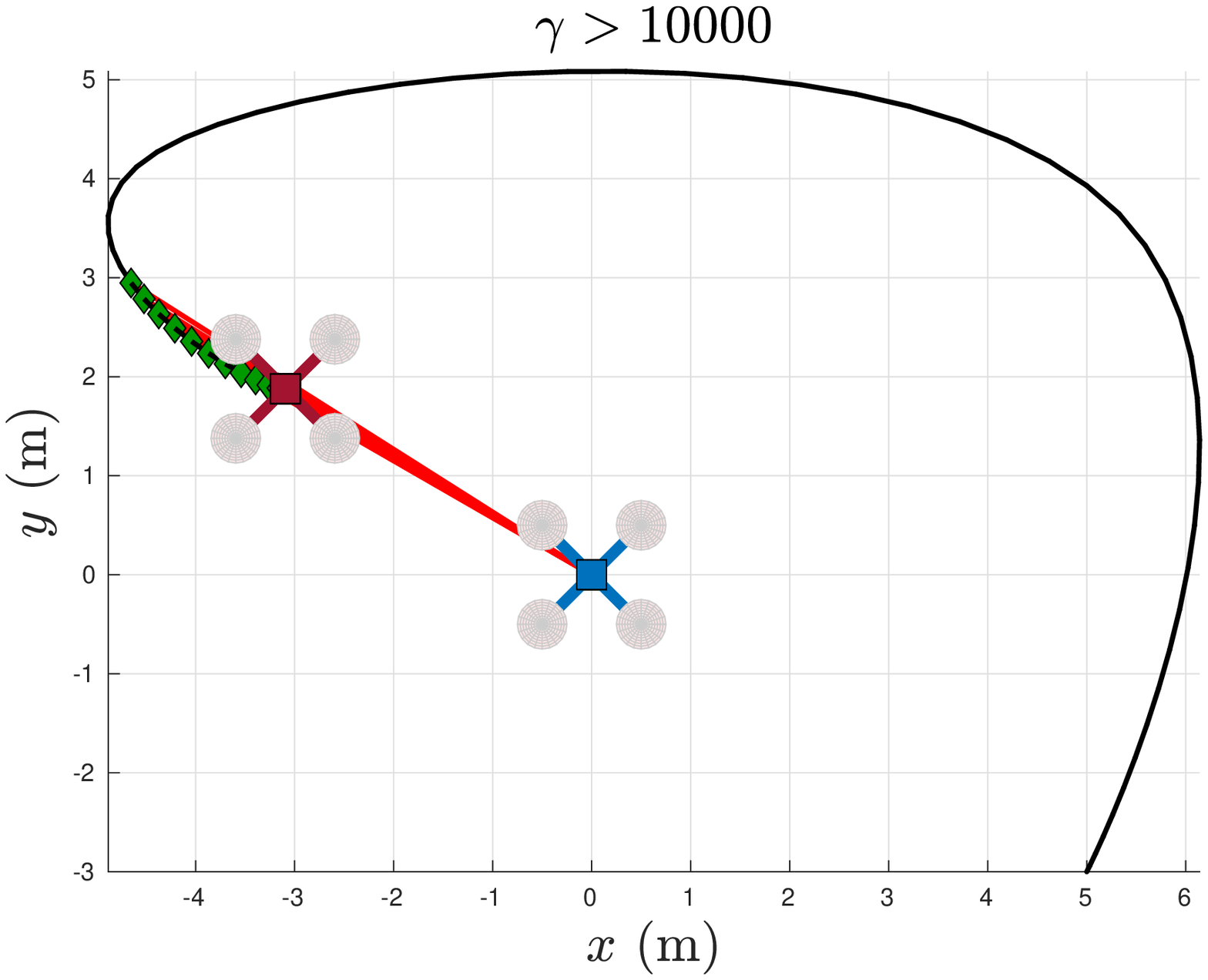}
    \caption{Effect of the parameter $\gamma$ on the keypoint selection with $K = 12$, and the old trajectory shown as the black line. (left) With $\gamma = 0$, the algorithm greedily minimizes the GDOP, and results in keypoints placed far apart in time. (middle) With a moderate value for $\gamma$, the algorithm obtains a balance between pure GDOP minimization, and making all the keypoints as close as possible. (right) With a large value for $\gamma$, the minimal cost is the solution with  the keypoints as close as possible, and the ``plain vanilla'' sliding window filter is recovered.}
    \label{fig:keypoints}
 \end{figure*} 

A naive implementation of the sliding window filter would result in including a state at every single measurement. With modern IMUs, this typically occurs at a frequency of 100~Hz - 1000~Hz, which can result in an enormous amount of states in the window if it were to span only a few seconds. The number of states in the window can be reduced by omitting all except a select few called \emph{keypoints} \cite{Leutenegger2015a}. The keypoints are identified as a set of $K$ distinct time indices $\mc{K} = \{p_0, p_1, \ldots, p_{K-1}\}$ such that the states $\mbf{x}_{p_0}, \mbf{x}_{p_1}, \ldots, \mbf{x}_{p_{K-1}}$ are the only states estimated inside the window. To do this, a process model of some sort is required that relates successive keypoint states.

A technique called \emph{pre-integration} expresses the process model between two non-adjacent states $\mbf{x}_i$ and $\mbf{x}_j$ by a single relation. The use of the linear model in \eqref{eq:rpe1} makes pre-integration trivial, as direct iteration of \eqref{eq:rpe1} leads to
\begin{align}
\mbf{x}_j &= \left(\prod_{k = i}^{j-1} \mbf{A}_{k}\right) \mbf{x}_i + \sum_{k = i}^{j-1} \left(\prod_{\ell = k+1}^{j-1} \mbf{A}_\ell\right) \mbf{B}_k \mbf{u}_{k},\\ \label{eq:rpe2}
&\triangleq \mbf{A}_{ji} \mbf{x}_i + \mbf{b}_{ji}. 
\end{align}
Importantly, the terms $\mbf{A}_{ji}$ and $\mbf{b}_{ji}$ do not need to be recomputed at each Gauss-Newton iteration of the sliding window filter, since they are independent of the state estimate. This yields substantial computational savings compared to a case where the process model is too complicated to easily pre-integrate. With this pre-integration ability, states in the sliding window filter can be placed arbitrarily far apart in time with a very minor increase in computation time. 

As such, this letter constructs a sliding window filter where each state is a keypoint, successive keypoint states are related by the pre-integrated process model \eqref{eq:rpe2}, and the measurement model at each keypoint is given by \eqref{eq:rpe3}. The ability to place keypoints at arbitrary locations in an agent's state history now gives rise to an entire design problem, which is to design an appropriate keypoint selection strategy. This letter will propose one of many possible strategies.

\subsection{Observability Analysis} \label{sec:obsv}
Evaluated at arbitrary keypoints, the observability rank condition given in \eqref{eq:map5} requires that 
\beq
\mathrm{rank}(\mc{O}) = 6, \label{eq:obs1}
\eeq 
where
\begin{multline*} 
\hspace{-0.3cm} \mc{O} =\\ \bma{ccccc} \mbs{\rho}_{p_0} & \mbs{\rho}_{p_1} & \mbs{\rho}_{p_2} & \cdots &\mbs{\rho}_{p_{K-1}} \\
    \mbf{0} & \Delta t_{p_1 p_0} \mbs{\rho}_{p_1} &\Delta t_{p_2 p_0} \mbs{\rho}_{p_2}&\cdots &\Delta t_{p_{K-1} p_0} \mbs{\rho}_{p_{K-1}}\ema,
\end{multline*}
$\mbs{\rho}_k = \mbf{r}^{12}_{a_k} / \norm{\mbf{r}^{12}_{a_k}}$, and $\Delta t_{ji} = t_j - t_i$. To ensure observability, the keypoints must be chosen to satisfy \eqref{eq:obs1}. 

\begin{theorem} \label{thm:observability} Sufficient conditions to satisfy \eqref{eq:obs1} are
\begin{enumerate}
    \item $K \geq 6$,
    \item $\Delta t_{p_1 p_0} \neq \Delta t_{p_2 p_0} \neq \ldots \Delta t_{p_K p_0} \neq 0$,
    \item and that there exists two non-intersecting subsets of $\{\mbs{\rho}_{p_0},\;\mbs{\rho}_{p_1},\;\ldots, \;\mbs{\rho}_{p_{K-1}}\}$, each composed of 3 linearly independent elements. 
\end{enumerate}
\end{theorem}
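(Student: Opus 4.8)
\emph{Proof strategy.} Because $\mc{O}$ has six rows, the rank condition \eqref{eq:obs1} holds if and only if $\mc{O}$ has trivial left null space. The plan is therefore to suppose $\mbf{w}^\trans \mc{O} = \mbf{0}^\trans$ for some $\mbf{w} = [\mbf{a}^\trans \; \mbf{b}^\trans]^\trans$ with $\mbf{a}, \mbf{b} \in \mathbb{R}^3$, and to show $\mbf{w} = \mbf{0}$. Reading the columns of $\mc{O}$ one at a time, this hypothesis is exactly
\[
\left(\mbf{a} + \Delta t_{p_i p_0}\, \mbf{b}\right)^\trans \mbs{\rho}_{p_i} = 0, \qquad i = 0, 1, \ldots, K-1,
\]
with the convention $\Delta t_{p_0 p_0} = 0$. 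I would then run a short case analysis on the pair $(\mbf{a}, \mbf{b})$, invoking each hypothesis of the theorem precisely where it is needed to exclude a degenerate configuration. (An alternative, more computational route is to exhibit an explicit invertible $6 \times 6$ submatrix of $\mc{O}$ built from the two independent triples and to evaluate its determinant via a Schur complement, but the null-space formulation makes the role of each hypothesis more transparent.)

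The two easy cases come first. If $\mbf{b} = \mbf{0}$, then $\mbf{a}^\trans \mbs{\rho}_{p_i} = 0$ for every $i$, and since condition 3 supplies three linearly independent $\mbs{\rho}_{p_i}$, the vector $\mbf{a}$ is orthogonal to a basis of $\mathbb{R}^3$, so $\mbf{a} = \mbf{0}$ and hence $\mbf{w} = \mbf{0}$. If instead $\mbf{b} \neq \mbf{0}$ but $\mbf{a} = \mu \mbf{b}$ for some scalar $\mu$, then $\mbf{a} + \Delta t_{p_i p_0} \mbf{b} = (\mu + \Delta t_{p_i p_0})\,\mbf{b}$ is a nonzero multiple of $\mbf{b}$ for every index except the single $i^\ast$ (if it exists) with $\Delta t_{p_{i^\ast} p_0} = -\mu$; uniqueness of $i^\ast$ is where condition 2 enters. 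Consequently all but at most one of the $\mbs{\rho}_{p_i}$ lie in the plane $\mbf{b}^\perp$, so every linearly independent triple among the $\mbs{\rho}_{p_i}$ must contain $i^\ast$, which two \emph{disjoint} such triples cannot both do. This contradicts condition 3, so this case is impossible; note this step genuinely uses \emph{two} disjoint independent triples together with condition 2.

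The remaining case, $\mbf{a}$ and $\mbf{b}$ linearly independent, is the delicate one, and I expect it to be the technical heart of the proof. Here each $\mbs{\rho}_{p_i}$ is forced into the plane $P_i := (\mbf{a} + \Delta t_{p_i p_0}\, \mbf{b})^\perp$, and since all the normals $\mbf{a} + \Delta t_{p_i p_0}\,\mbf{b}$ lie on a single affine line inside the $2$-plane $\mathrm{span}(\mbf{a},\mbf{b})$, the planes $P_0, \ldots, P_{K-1}$ form a pencil sharing the common line $\ell := \mathrm{span}(\mbf{a}, \mbf{b})^\perp$; by condition 2 these $K \geq 6$ planes are pairwise distinct, so their traces on $\ell^\perp$ are $K$ distinct lines through the origin. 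Decomposing each $\mbs{\rho}_{p_i}$ into its component along $\ell$ and its component on the corresponding line of $\ell^\perp$ and expanding the $3 \times 3$ determinant of a triple, one reduces linear independence of a triple to a single scalar condition. The argument should then conclude by showing that, under conditions 2 and 3, a nonzero $\mbf{w}$ of this type cannot coexist with two disjoint independent triples, i.e. it contradicts condition 3. Extracting that contradiction from the interplay between the rigid pencil structure, the two disjoint independent triples, and the distinct time offsets is where I anticipate the real work lies, the other two cases being essentially immediate.
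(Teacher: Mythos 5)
Your Cases 1 and 2 are handled correctly, but the proposal is not a proof: the entire burden falls on Case 3 ($\mbf{a}$, $\mbf{b}$ linearly independent), which you explicitly leave open, and unfortunately the contradiction you hope to extract there does not exist. Your own pencil-of-planes picture shows why: fix linearly independent $\mbf{a},\mbf{b}$ and distinct nonzero offsets $t_i = \Delta t_{p_i p_0}$; each plane $P_i = (\mbf{a}+t_i\mbf{b})^\perp$ is two-dimensional, the planes are pairwise distinct, and a generic choice of $\mbs{\rho}_{p_i} \in P_i$ produces plenty of linearly independent triples, so conditions 1--3 do not force $\mbf{w}=\mbf{0}$. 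Concretely, take $\mbf{a} = (1,0,0)^\trans$, $\mbf{b} = (0,1,0)^\trans$, $t_i = i$ for $i=0,\ldots,5$, and (up to normalization) $\mbs{\rho}_{p_0}=(0,0,1)^\trans$, $\mbs{\rho}_{p_1}=(1,-1,0)^\trans$, $\mbs{\rho}_{p_2}=(2,-1,1)^\trans$, $\mbs{\rho}_{p_3}=(3,-1,0)^\trans$, $\mbs{\rho}_{p_4}=(4,-1,1)^\trans$, $\mbs{\rho}_{p_5}=(5,-1,1)^\trans$. Then $K=6$, the $t_i$ are distinct and nonzero, and the disjoint triples $\{\mbs{\rho}_{p_0},\mbs{\rho}_{p_1},\mbs{\rho}_{p_2}\}$ and $\{\mbs{\rho}_{p_3},\mbs{\rho}_{p_4},\mbs{\rho}_{p_5}\}$ each have nonzero determinant ($1$ and $-1$, respectively), so all three conditions hold; yet $(\mbf{a}+t_i\mbf{b})^\trans\mbs{\rho}_{p_i}=0$ for every $i$, hence $[\,\mbf{a}^\trans \;\; \mbf{b}^\trans\,]\mc{O} = \mbf{0}^\trans$ and $\mathrm{rank}(\mc{O}) \le 5$. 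So Case 3 cannot be closed: the stated hypotheses are not sufficient for \eqref{eq:obs1}.

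This is worth contrasting with the paper's own argument, which proceeds column-wise: it shows that columns $1$--$3$ are independent (from the first independent triple), that each of columns $4$--$6$ is individually independent of columns $1$--$3$ (using condition 2), and that columns $4$--$6$ are independent among themselves (from the second triple), and then concludes that all six columns are jointly independent. That last inference is invalid---subsetwise independence does not imply joint independence---and the configuration above realizes the failure, so the gap you would hit in Case 3 is in fact a gap in the theorem and in the paper's proof, not merely in your write-up. Your left-null-space formulation is the right lens: observability at the keypoints is equivalent to the nonexistence of $(\mbf{a},\mbf{b})\neq(\mbf{0},\mbf{0})$ with $(\mbf{a}+\Delta t_{p_i p_0}\mbf{b})^\trans\mbs{\rho}_{p_i}=0$ for all $i$, and any correct sufficient condition must exclude the coplanar-pencil configuration above, which conditions 1--3 as stated do not.
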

\begin{proof}
    See Appendix \ref{sec:observability_proof}.
\end{proof}
These conditions result in the requirement that the agents' relative motion be non-planar, and that  keypoints are at distinct times.
\subsection{Keypoint Selection Strategy}
\begin{algorithm}[t]
	\caption{Greedy Keypoint Selection. Given the current time step $k$, the total number of keypoints $K$, and the keypoint index set of the previous window $\mc{K}_{\mathrm{old}}$, calculate the new keypoint index set $\mc{K}$.  A set of candidate indices $\mc{C}$ is maintained.} 
	\label{alg:keypoints}
	\begin{algorithmic}[1]
        \Function {GetKeypoints}{$k, K ,\mc{K}_{\mathrm{old}}$}
            \State $\mc{K} \gets \{k-3, k-2, k-1, k\}$
			\State $\mc{C} \gets \mc{K}_{\mathrm{old}} \cup \{ \max(\mc{K}_{\mathrm{old}}) + 1, \ldots, k - 4\}$
            \For{$i = 1,\ldots, K-4$}
                \State $J_{\mathrm{best}} \gets \infty$
                \ForAll {$p \in \mc{C}$}
                    \State $\mc{K}_{\mathrm{temp}} \gets \mc{K} \cup \{p\}$
                    \If{$J(\mc{K}_{\mathrm{temp}}) \leq J_{\mathrm{best}}$}
                        \State $J_{\mathrm{best}} \gets J(\mc{K}_{\mathrm{temp}})$
                        \State $p_{\mathrm{best}} \gets p$
                    \EndIf
                \EndFor
                \State $\mc{K} \gets \mc{K} \cup \{ p_{\mathrm{best}}\}$
                \State $\mc{C} \gets \mc{C} \setminus \{ p_{\mathrm{best}}\}$
            \EndFor
			\State \Return $\mc{K}$
		\EndFunction 
    \end{algorithmic} 
    \vspace{-0.05cm}
\end{algorithm}
The keypoint selection strategy aims to find a suitable keypoint index set $\mc{K}$. The proposed strategy is a greedy algorithm, which places keypoints one-by-one, searching linearly through all possible candidate keypoint locations, and selecting the location with the lowest cost. The chosen cost function is based on the \emph{geometric dilution of precision} (GDOP) \cite[Ch.~8.5]{Farrell2008}, and is given by
\beq
J(\mc{K}) = \trace\left((\mbf{D}(\mc{K})^\trans \mbf{D}(\mc{K}))^{-1}\right) + \gamma \sum_{i = 1}^{|\mc{K}|} \Delta t_{p_i p_{i-1}}, \label{eq:kp1}
\eeq
where $\mbf{D}(\mc{K}) = \bma{ccccc} \mbs{\rho}_{p_0} & \mbs{\rho}_{p_1} & \mbs{\rho}_{p_2} & \cdots &\mbs{\rho}_{p_{|\mc{K}|}} \ema^\trans$, $|\mc{K}|$ denotes the number of elements in $\mc{K}$, and $\gamma$ is a tuning parameter. The first term in \eqref{eq:kp1} is the square of the GDOP, also noting that the form of the $\mbf{D}(\mc{K}) $ matrix shows a natural similarity to the observability matrix shown in \eqref{eq:obs1}. Minimizing this first term avoids a keypoint selection that will make the observability matrix close to rank deficient. The second term in \eqref{eq:kp1} penalizes the keypoints for being placed too far apart in time. Excessively old keypoints will require long pre-integration between their adjacent states, which increases the covariance of the estimate. The tuning parameter $\gamma$ is used to balance the two terms, and its effect is illustrated in Figure \ref{fig:keypoints}.
\begin{figure}[t]
    \includegraphics[width = \linewidth]{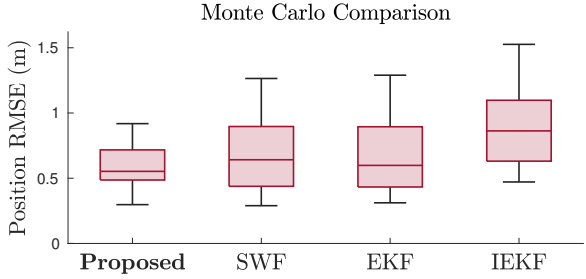}
    \caption{Results of 200 Monte Carlo trials. The proposed method offers an enhancement over the ``plain vanilla'' implementation of the sliding window filter (SWF), which simply places the keypoints at the most recent range measurements, as well as a standard extended Kalman filter (EKF), and an Iterated EKF (IEKF).}
    \label{fig:monte_carlo_box}
\end{figure}
Finally, it is also possible to recursively compute the matrix $\mbs{\Lambda} = (\mbf{D}(\mc{K}) ^\trans \mbf{D}(\mc{K}) )^{-1}$, which avoids recomputing the inverse when a new keypoint is added to $\mc{K}$. Defining $\tilde{\mc{K}} = \mc{K} \cup \{p\}$, it can be shown that
\begin{align}
\left(\mbf{D(\tilde{\mc{K}})}^\trans \mbf{D(\tilde{\mc{K}})}\right)^{-1} &= \left(\mbf{D({\mc{K}})}^\trans \mbf{D({\mc{K}})} + \mbs{\rho}_p \mbs{\rho}_p^\trans\right)^{-1}\\
&= \mbs{\Lambda} - \frac{\mbs{\Lambda}\mbs{\rho}_p\mbs{\rho}_p^\trans\mbs{\Lambda}}{1 +  \mbs{\rho}_p^\trans\mbs{\Lambda}\mbs{\rho}_p}, \label{eq:kp2}
\end{align}
where, in \eqref{eq:kp2}, the Woodbury matrix identity is used. Equation \eqref{eq:kp2} is particularly useful when evaluating the \textbf{if} statement in line 8 of Algorithm \ref{alg:keypoints}, which shows the details of the proposed keypoint placement strategy. The algorithm initializes $\mc{K}$ with the 4 most recent indices in order to produce an  invertible $(\mbf{D}(\mc{K})^\trans \mbf{D}(\mc{K}))^{-1}$ matrix, after which the cost function naturally leads to a selection of older keypoints.

\begin{figure}[t]
    \includegraphics[width = \linewidth]{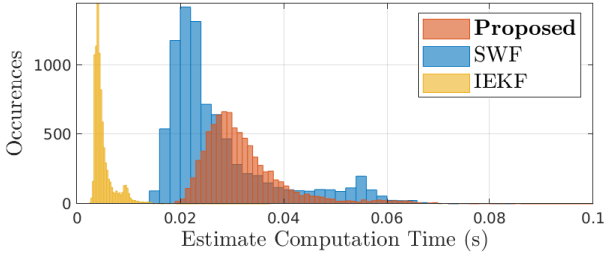}
    \caption{Histogram of the computation time of the different algorithms, when simulated on a laptop with an Intel Core i7-9750H CPU.}
    \label{fig:comp_time}
\end{figure}
\begin{figure}[t]
    \includegraphics[width = \linewidth,clip = true, trim = {14.5cm 4cm 14cm 3.5cm}]{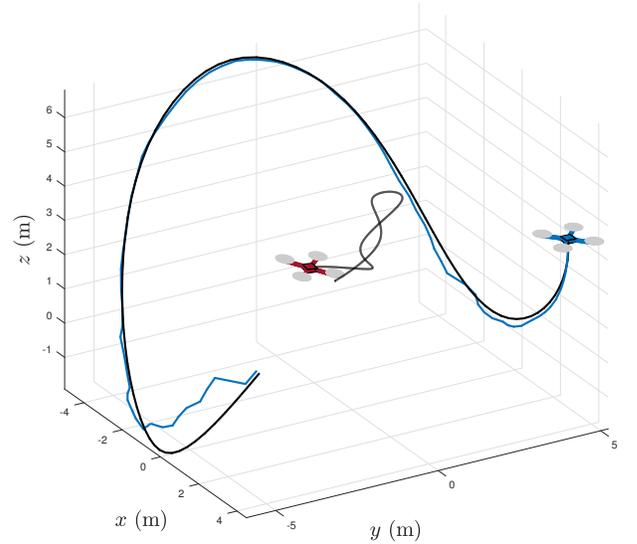}
    \caption{Simulation of the proposed estimation algorithm. The ground truth trajectory is shown in black, and the estimated trajectory in blue.}
    \label{fig:two_agent_sim_traj}
\end{figure}

\subsection{Multi-robot Scenario}
Although this paper focuses on a scenario with two agents, one way to extend the proposed algorithm to multiple agents is to naively copy the estimator for each pair of agents. In any case, it is possible to leverage the proposed framework to drastically simplify the inter-agent communication. By recalling that $\mbf{u}_k = \mbftilde{u}^{\mathrm{acc},1}_{a_k} - \mbftilde{u}^{\mathrm{acc},2}_{a_k}$, it is straightforward to show that $\mbf{b}_{ji}$ in \eqref{eq:rpe2} can be written as 
\bdis 
\mbf{b}_{ji} = \mbf{b}_{ji}^{\mathrm{acc},1} - \mbf{b}_{ji}^{\mathrm{acc},2},\edis
 where $\mbf{b}_{ji}^{\mathrm{acc},1}$ and $\mbf{b}_{ji}^{\mathrm{acc},2}$  only depend on measurements from Agents 1 and 2, respectively. This has a significant implication: the agents do not need to communicate their acceleration measurements at high frequency, but can instead preintegrate them in advance to produce $\mbf{b}_{ji}^{\mathrm{acc},1}$ and $\mbf{b}_{ji}^{\mathrm{acc},2}$, then communicate these quantities at an arbitrarily lower frequency. This is of substantial practical interest, as communicating all high-frequency IMU measurements could quickly exceed the inter-agent communication capacity, especially for multi-robot scenarios.

\section{Simulation Results}\label{sec:simulation}
The proposed algorithm is tested in simulation, and compared against a ``plain vanilla'' sliding window filter (\emph{Vanilla SWF} or \emph{SWF} in the figures), which simply uses the $K$ most recent range measurements as the keypoint times, as shown on the right of Figure \ref{fig:keypoints}. A comparison is also made to using a standard EKF as the relative position estimator.
The simulation is repeated for 200 Monte Carlo trials, where each trial uses a different, randomized trajectory. Table \ref{table:sim1} shows the values used when generating zero-mean Gaussian noise on each sensor.

Figure \ref{fig:monte_carlo_box} shows a box plot of the position root-mean-squared error (RMSE) for the 200 Monte Carlo trials, where the $\mathrm{RMSE}=\sqrt{(1/N)\sum_{k = 1}^{N} \mbf{e}_{r_k}^\trans \mbf{e}_{r_k}}$, $\mbf{e}_{r_k} = \mbfhat{r}^{12}_{a_k} - \mbf{r}^{12}_{a_k}$.

Using the greedy keypoint placement scheme shows a 9\% reduction in average position RMSE when compared to the Vanilla SWF,  7\%  compared to the EKF, and 32\% compared to the iterated EKF \cite[Ch.~4.2.5]{Barfoot2019}. An interesting observation is that EKF can perform as well, if not better than the sliding window filter and the iterated EKF, which both iterate the state estimate until a locally minimal least-squares cost is found. This implies that there exists a state with lower overall process and measurement error, yet higher true estimation error. Many instances have been observed where the least-squares cost decreases while true estimation error increases, a possible consequence of the unobservable nature of this problem. 


The proposed algorithm takes an average of 0.035~s per estimate computation, whereas the Vanilla SWF takes 0.027~s, the IEKF takes 0.005~s, and the EKF takes 0.0006~s. A histogram of the computation time is shown in Figure \ref{fig:comp_time}.

\section{Experimental Results} \label{sec:experiment}
The proposed algorithm is also tested in a real experiment, with two different hardware setups, both shown in Figure \ref{fig:baby_agent}. The first consists of a Raspberry Pi 4B, with an LSM9DS1 9-DOF IMU and a Pozyx UWB Developer Tag, providing distance measurements to an identical device. Although the cost of the actual sensors embedded in one of these prototypes is \$15 USD, each cost a total of approximately \$270 USD. However, this is mainly due to the cost of the Raspberry Pi 4B, the battery pack, and the Pozyx Developer Tag. The second setup consists of a Pozyx Developer tag mounted to quadrotors possessing a Pixhawk 4, which provides IMU and magnetometer data. In both setups, two identical devices are randomly moved around a room by hand, in an overall volume of roughly $5~\mathrm{m} \times 4~\mathrm{m} \times 2~\mathrm{m}$. When using the quadrotors, the motors are spinning without propellors to examine the effect of vibrations and magnetic interference. Ground truth position and attitude measurements are collected using an OptiTrack optical motion capture system. 
\begin{figure}[t]
    \includegraphics[width = \linewidth]{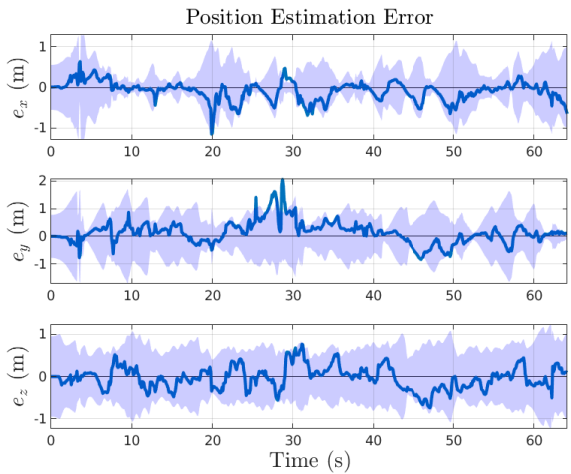}%
    \caption{Components of the position estimation error during an experimental trial, using the greedy keypoint placement method. The error compared to ground truth is shown as the blue line, with the shaded area representing the $\pm3 \sigma$ confidence bounds.}
    \label{fig:two_agent_greedy_pos_error}
\end{figure}
\begin{table}[t]
    \caption{Simulation Noise Properties}
    \vspace{-0.2cm}
    \begin{center}
    \begin{tabular}{c|c|c}
    \hline
    \textbf{Specification} & \textbf{Value} & \textbf{Units}\\
    \hline \hline
    Magnetometer std. dev. & 1 & $\mu \mathrm{F}$ \\
    Gyroscope std. dev. & 0.001 & $\mathrm{rad}/s$ \\
    Accelerometer std. dev.  & 0.01 & $\mathrm{m/s}^2$ \\
    Distance std. dev & 0.1 & $\mathrm{m}$\\
    Initial position std. dev. & $0.8$ & $\mathrm{m}$\\
    Initial velocity std. dev. & $0.1$ & $\mathrm{m/s}$\\
    Initial attitude std. dev. & $0.001$ & $\mathrm{red}$\\
    Accel/Gyro/Mag frequency & $100$ & $\mathrm{Hz}
    $\\
    Distance frequency & $10$ & $\mathrm{Hz}
    $\\
    RPE window size $K$ & $20$ & -\\
    RPE keypoint selection parameter $\gamma$ & $100$ & -\\
    RPE frequency & $10$ & $\mathrm{Hz}
    $\\
    \hline

    \end{tabular}
    \label{table:sim1}
    \end{center}
    \vspace{-8pt}
\end{table}

\begin{figure}[t]
    \centering
    \includegraphics[width = \linewidth]{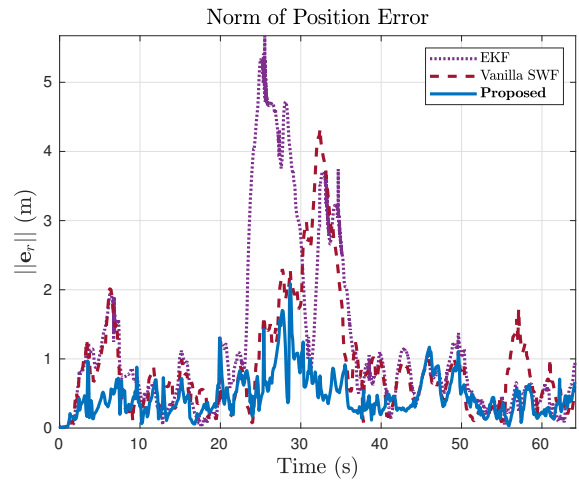}%
    \caption{Comparison of the norm of the position estimation error $\mbf{e}_r$, during an experimental trial. }
    \label{fig:two_agent_greedy_pos_norm}
\end{figure}
\begin{figure}[t]
    \centering
    \includegraphics[width = 0.49\linewidth]{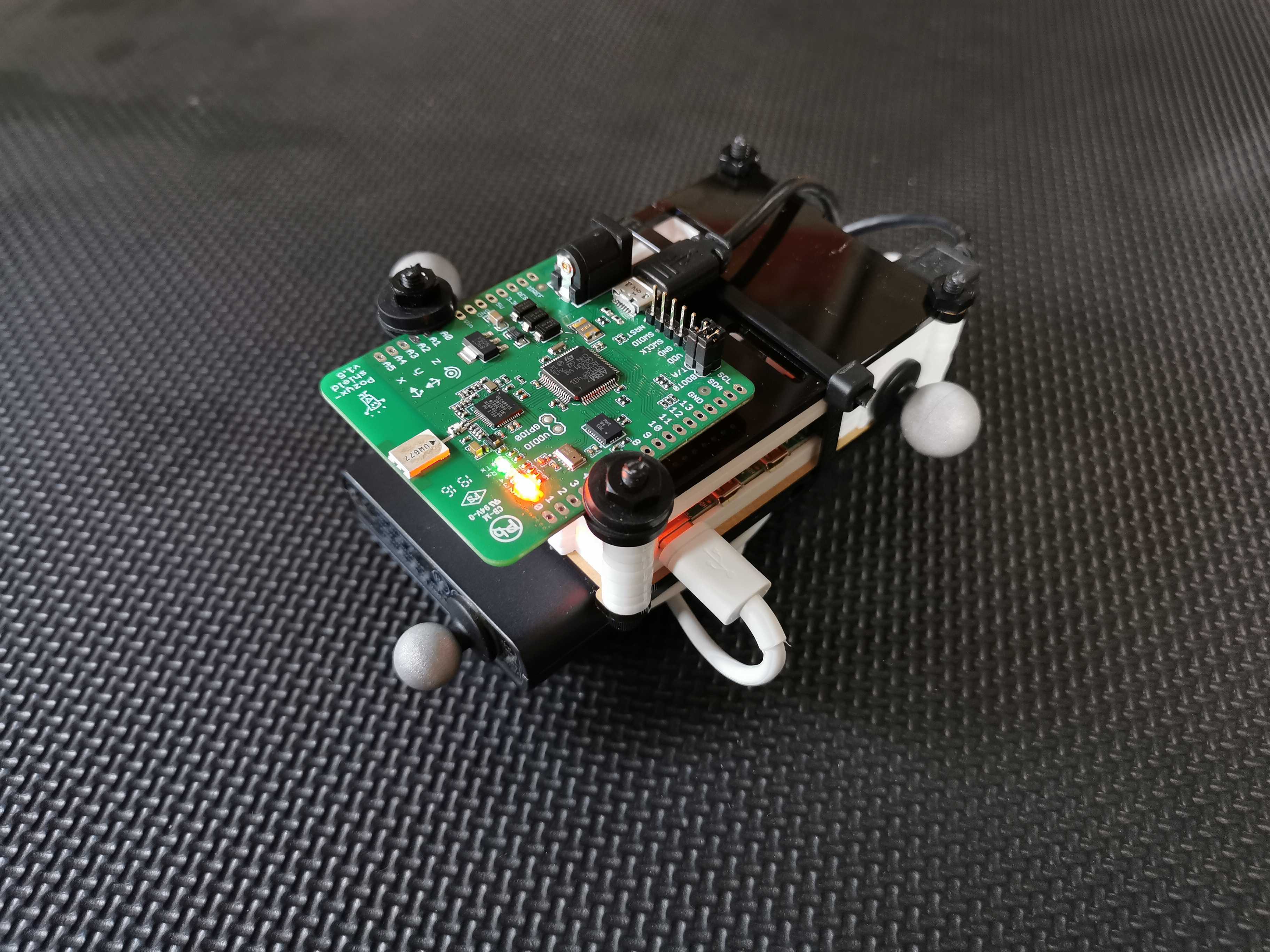}%
    \hspace{0.00001cm}
    \includegraphics[width = 0.49\linewidth]{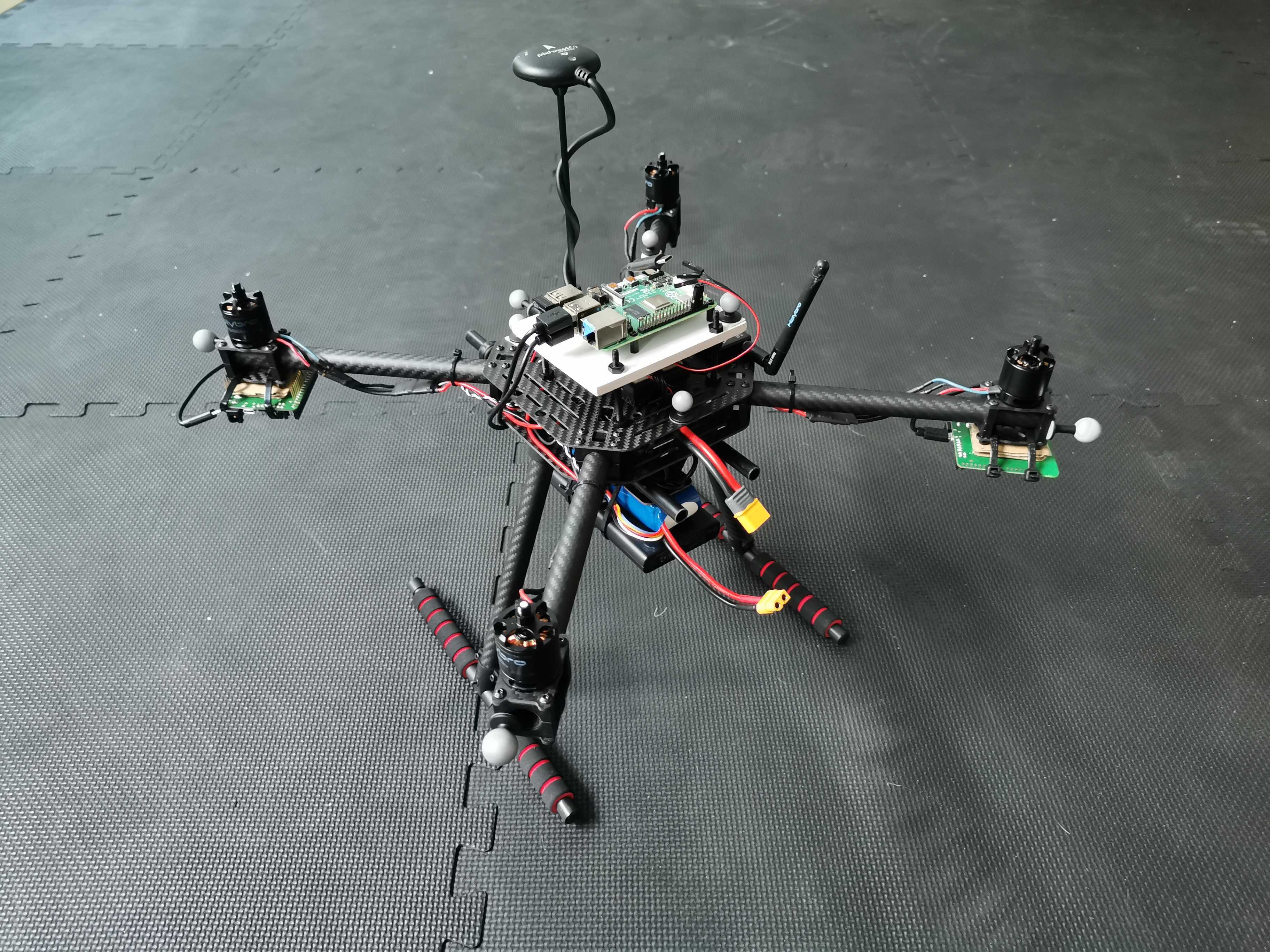}
    \vspace{-10pt}
    {\caption{(left) Pozyx UWB Developer Tag mounted to a Raspberry Pi 4B, in a case. (right) Pozyx UWB Developer Tag mounted to a quadrotor with a Pixhawk 4.}}
    \label{fig:baby_agent}
\end{figure}

Table \ref{table:exp_rmse} shows the position RMSE of the different algorithms, for each trial. The proposed method outperforms ``plain vanilla'' implementations of the SWF and the EKF by a large margin, in all of the trials, showing the value of the greedy keypoint selection strategy. Figure \ref{fig:two_agent_greedy_pos_error} shows the position estimation error of the proposed method, and that the estimate stays within the $\pm3 \sigma$ confidence bounds, except for two very brief moments. Finally, Figure \ref{fig:two_agent_greedy_pos_norm} shows the norm of the position estimation error in one of the trials. {During the periods of large error, between 20 and 40 seconds in  Figure \ref{fig:two_agent_greedy_pos_norm}, the least squares cost for all three algorithms did not dramatically increase relative to other periods. This implies that the EKF and the Vanilla SWF have converged to ambiguous states which also have low process and measurement error, again a possible consequence of unobservability.}
\begin{table}[H]
    \caption{Position RMSE of Experimental Trials}
    \vspace{-0.3cm}
    \begin{center}
    \begin{tabular}{l|c|c|c}
    \hline
    \textbf{} & \textbf{Proposed} & \textbf{Vanilla SWF}  & \textbf{EKF}\\
    \hline \hline
    Trial 1 (one static agent)& \textbf{0.68}~$\mbf{m}$ & 1.41~$\mathrm{m}$& 2.11~$\mathrm{m}$\\
    Trial 2 (both moving)& \textbf{0.55}~$\mbf{m}$&  1.22~$\mathrm{m}$  &1.71~$\mathrm{m}$\\
    Trial 3 (both moving)& \textbf{0.86}~$\mbf{m}$&  1.56~$\mathrm{m}$& 2.69~$\mathrm{m}$\\
    {Trial 4 (spinning motors)}& \textbf{0.87}~$\mbf{m}$&  1.34~$\mathrm{m}$& 1.12$~\mathrm{m}$\\
    \hline

    \end{tabular}
    \label{table:exp_rmse}
    \end{center}
    \vspace{-12pt}
\end{table}

\section{Conclusion}
This letter shows that it is possible to estimate the relative position of one UWB-equipped agent relative to another, provided that there is sufficient motion between them, and that they also possess 9-DOF IMUs. The proposed algorithm with strategically placed keypoints shows ubiquitous improvement over standard estimators, both in simulation and experiment, where there is 2 to 3-fold reduction in positioning error.  Future work should consider switching the loosely-coupled architecture for a tightly-coupled one, as the current scheme is heavily dependent on accurate attitude estimates. In environments with large magnetic perturbations, this can deteriorate performance. Furthermore, this algorithm should be tested on actual flying quadrotors, where vibrations can be more significant than those experienced in these experiments.

\appendix
\subsection{Proof of Theorem \ref{thm:observability}} \label{sec:observability_proof}
\begin{proof}
    Satisfying the rank condition in \eqref{eq:obs1} amounts to showing that there are $6$ linearly independent columns in the matrix $\mc{O}$. Since there are $K$ columns in $\mc{O}$, $K \geq 6$, which is Condition 1, is immediately required so that there are at least 6 columns. If the theorem is proven for $K=6$, then it also holds for $K >6$, as the addition of columns will not affect the linear independence of the first six. As such, using a change in notation for brevity, the rank condition in \eqref{eq:obs1} requires that
    \bdis
    \mc{O} = \bma{cccccc} \mbs{\rho}_{0} & \mbs{\rho}_{1} & \mbs{\rho}_{2} & \mbs{\rho}_{3} & \mbs{\rho}_{4} & \mbs{\rho}_{5} \\
    \mbf{0} & \Delta t_1\mbs{\rho}_{1} & \Delta t_2\mbs{\rho}_{2} & \Delta t_3\mbs{\rho}_{3} & \Delta t_4\mbs{\rho}_{4} & \Delta t_5\mbs{\rho}_{5}\ema
    \edis
    have rank 6. Without loss of generality, assume $\{ \mbs{\rho}_0,\mbs{\rho}_1,\mbs{\rho}_2\}$ are linearly independent. It immediately follows that the first three columns of $\mc{O}$ are linearly independent. Consider now $\mbs{\rho}_3$, again without loss of generality. Since $\mbs{\rho}_0,\mbs{\rho}_1,\mbs{\rho}_2 \in \mathbb{R}^3$, $\mbs{\rho}_3$ is linearly dependent on $\mbs{\rho}_0,\mbs{\rho}_1,\mbs{\rho}_2$ and can be written as 
    \beq \label{eq:obsv1}
    \mbs{\rho}_3 = \alpha_0\mbs{\rho}_0 + \alpha_1\mbs{\rho}_1 + \alpha_2\mbs{\rho}_2.
    \eeq
    If the fourth column of $\mc{O}$ were linearly dependent on the first three, then there would exist $\beta_0, \beta_1, \beta_2$ such that
    \beq \label{eq:obsv2}
    \bma{c} \mbs{\rho}_3 \\ \Delta t_3 \mbs{\rho}_3 \ema = \beta_0 \bma{c} \mbs{\rho}_0 \\ \mbf{0}\ema + \beta_1 \bma{c} \mbs{\rho}_1 \\ \Delta t_1 \mbs{\rho}_1 \ema + \beta_2 \bma{c} \mbs{\rho}_2 \\ \Delta t_2 \mbs{\rho}_2 \ema.
    \eeq
    Substituting \eqref{eq:obsv1} into \eqref{eq:obsv2} yields
    \begin{multline}\label{eq:obsv3}
    \bma{c}\alpha_0\mbs{\rho}_0 + \alpha_1\mbs{\rho}_1 + \alpha_2\mbs{\rho}_2 \\ \Delta t_3 (\alpha_0\mbs{\rho}_0 + \alpha_1\mbs{\rho}_1 + \alpha_2\mbs{\rho}_2) \ema = \beta_0 \bma{c} \mbs{\rho}_0 \\ \mbf{0}\ema \\+ \beta_1 \bma{c} \mbs{\rho}_1 \\ \Delta t_1 \mbs{\rho}_1 \ema + \beta_2 \bma{c} \mbs{\rho}_2 \\ \Delta t_2 \mbs{\rho}_2 \ema.
    \end{multline}
    The first three lines of \eqref{eq:obsv3} immediately require that $\alpha_0 = \beta_0$, $\alpha_1 = \beta_1,$ $\alpha_2 = \beta_2$, which reduces the last three lines of \eqref{eq:obsv3} to 
    \beq
    \Delta t_3 \alpha_0\mbs{\rho}_0 + \Delta t_3\alpha_1\mbs{\rho}_1 + \Delta t_3\alpha_2\mbs{\rho}_2
    = \alpha_1 \Delta t_1 \mbs{\rho}_1 + \alpha_2 \Delta t_2 \mbs{\rho}_2.
    \eeq
    Hence, the fourth column of $\mc{O}$ is linearly dependent on the first three if and only if $\alpha_0 = 0,$ $ \Delta t_3 = \Delta t_2,$ $ \Delta t_3 = \Delta t_1$. Condition 2 in Theorem \ref{thm:observability} means $ \Delta t_3 \neq \Delta t_2 \neq \Delta t_1$, hence proving the linear independence of the fourth column of $\mc{O}$ from the first three. Identical proofs show that the fifth and sixth columns of $\mc{O}$ are also linearly independent from the first three. If additionally $\{ \mbs{\rho}_3,\mbs{\rho}_4,\mbs{\rho}_5\}$ are linearly independent, then the fourth, fifth, and sixth columns of $\mc{O}$ will be linearly independent from each other, in addition to each being linearly independent from the first three. Hence, all 6 columns of $\mc{O}$ will be linearly independent, giving it a rank of 6. Since this proof was done for a completely arbitrary choice of linearly independent sets $\{ \mbs{\rho}_0,\mbs{\rho}_1,\mbs{\rho}_2\}$ and $\{ \mbs{\rho}_3,\mbs{\rho}_4,\mbs{\rho}_5\}$, it is only required that there exists two distinct non-intersecting subsets of $\{ \mbs{\rho}_0,\mbs{\rho}_1,\ldots,\mbs{\rho}_K\}$, each containing 3 linearly independent elements, which is Condition 3.
\end{proof}

{\AtNextBibliography{\small}
\printbibliography}

\end{document}